\DeclareMathOperator*{\argmin}{argmin}
\newmdtheoremenv{theorem}{Theorem}
\newmdtheoremenv{corollary}{Corollary}[theorem]
\DeclarePairedDelimiterX{\Iintv}[1]{\llbracket}{\rrbracket}{\iintvargs{#1}}
\providecommand{\keywords}[1]
{
  \small	
  \textbf{\textit{Keywords---}} #1
}
\title{Fully Probabilistic Design for Optimal Transport}
\author[1]{Sarah Boufelja Y.}
\author[2]{Anthony Quinn}
\author[3]{Martin Corless}
\author[1]{Robert Shorten}
\affil[1]{Dyson School of Design Engineering, Imperial College London}
\affil[2]{ Department of Electronic and Electrical Engineering, Trinity College Dublin}
\affil[3]{Department of Aeronautics and Astronautics, Purdue University}
\begin{document}
% \nolinenumbers
\maketitle
\section*{Abstract}
The goal of this paper is to introduce a new theoretical framework for Optimal Transport (OT), using the terminology and techniques of Fully Probabilistic Design (FPD). Optimal Transport is the canonical method for comparing probability measures and has been successfully applied in a wide range of areas (computer vision \cite{Rubner2004TheEM}, computer graphics \cite{article}, natural language processing \cite{pmlr-v37-kusnerb15}, etc.). However, we argue that the current OT framework suffers from two shortcomings: first, it is hard to induce generic constraints and probabilistic knowledge in the OT problem; second, the current formalism does not address the question of uncertainty in the marginals, lacking therefore the mechanisms to design robust solutions. By viewing the OT problem as the optimal design of a probability density function with marginal constraints, we prove that OT is an instance of the more generic FPD framework. In this new setting, we can furnish the OT framework with the necessary mechanisms for processing probabilistic constraints and deriving uncertainty quantifiers, hence establishing a new extended framework, called FPD-OT. Our main contribution in this paper is to establish the connection between OT and FPD, providing new theoretical insights for both. This will lay the foundations for the application of FPD-OT in a subsequent work, notably in processing more sophisticated knowledge constraints, as well as in designing robust solutions in the case of uncertain marginals. \\\\
\keywords{Optimal Transport, Fully Probabilistic Design, Convex optimization}

\section{Introduction}
Many Machine Learning (ML) problems reduce to the question of comparing and summarizing probability measures. For example, problems in domain adaptation, adversarial training and distributed learning fall withing this setting. Measuring the distance between two probability measures can generally be addressed using any divergence function \cite{article}. However, such functions often do not consider the spatial properties and the physical distances between these measures. Optimal Transport on the other hand, converts the distance between samples to a distance between probability measures, and in doing so endows the space of measures with a topology: if the underlying space is Euclidean, the concepts of interpolation, barycenters and gradient of functions are naturally extended to the space of measures \cite{https://doi.org/10.48550/arxiv.1803.00567}. 
\\\\
Notwithstanding this important property, when it comes to the practical requirement of incorporating probabilistic constraints in the OT problem, or processing uncertainty in the marginals, there exists no systematic or generic methodology. This latter observation motivates our interest in Fully Probabilistic Design (FPD). FPD is the axiomatic framework for designing probability measures under uncertainty, while being consistent with particular knowledge constraints, imposed by the designer \cite{KARNY2020104719}. These constraints express any additional information about the unknown probability measure, in the form of a prior density, a set membership, some physical laws, etc. \cite{QUINN2016532}. In this paper, we prove that regularised OT is a special case of FPD. Indeed, by formulating the OT problem as a constrained design of a joint probability density, the connection with FPD emerges naturally. We argue that this connection yields a structured and robust FPD-OT framework. \\\\
Our paper is structured as follows. We begin in Section \ref{OT} by reviewing the key concepts of OT, emphasising the mathematical objects used later in the document. In Section \ref{7}, we introduce FPD and establish in Section \ref{FPDOT} the connection between OT and FPD. The key conclusions follow in Section \ref{conclusions}. 
\section{Optimal Transport}\label{OT}
Let ($\Omega$, $\mathfrak{F}$, $\mathbbm{P}$) be a probability space. $X$:  $\Omega \mapsto \Omega_{S}$ and $Y$: $\Omega \mapsto \Omega_{T}$ denote two random variables, inducing ($\Omega_{S}$, $\mathfrak{F}_{S}$, $\mu$) and ($\Omega_{T}$, $\mathfrak{F}_{T}$, $\nu$), the source and target measure spaces, respectively.
In the sequel, $\mu$ and $\nu$ denote probability measures, described by their Radon-Nikodym densities \textit{w.r.t} the dominating measure, $\lambda$, which can be instantiated as either the Lebesgue measure or the counting measure. We overload $\mu$ and $\nu$ to denote the probability density functions (pdf) in the continuous case (or probability mass functions (pmf) in the discrete case).\\\\
Optimal Transport was originally introduced by the French mathematician Gaspard Monge to study the problem of shovelling, with minimal cost, a pile of sand into a hole of the same volume \cite{monge1781memoire}. Albeit important, this early formulation was too restrictive as it may not have a solution. Kantorovitch later proposed a probabilistic relaxation, allowing mass to split \cite{https://doi.org/10.48550/arxiv.1803.00567}. In this new setting, the objective is to design a \textbf{transport plan}, that is, a joint pdf $\pi$, satisfying:
\begin{equation}\label{kanto}
\pi_{OT}(x,y|\mathcal{K}) \equiv \argmin_{\pi \in \boldsymbol{\Pi_{\mathcal{K}}}}\left\{\int_{\Omega_{S}\times\Omega_{T}}c(x, y)\pi(x,y)d\lambda(x,y)\right\}
\end{equation}
% \begin{equation}
% \tilde{\pi} = \argmin_{\pi}\left\{\sum_{i=1}^{n}C(x_i, y_j)\pi_{i,j} | \pi \in \Pi(\mu, \nu)\right\}
% \end{equation}
% Or equivalently, in the continuous case:
$c: \Omega_{S}\times\Omega_{T} \mapsto \mathbb{R}^{+} $ is a measurable cost function, $\pi(x,y)$ denotes an unknown (variational) pdf with support in: $\Omega_{S}\times \Omega_{T}$ and $\boldsymbol{\Pi}_{\mathcal{K}}$ denotes the set of joint pdfs $\pi(x,y|\mathcal{K})$ with support in: $\Omega_{S}\times \Omega_{T}$, on which some knowledge constraints $\mathcal{K}$ are imposed. These knowledge constraints correspond to any side information that should be processed in the optimization problem \ref{kanto}. In the context of Optimal Transport, $\mathcal{K}$ represents the marginal constraints that should be satisfied by $\pi$, that is, \begin{equation} \boldsymbol{\Pi}_{\mathcal{K}} \equiv \Bigl\{\pi \in \mathcal{P}(\Omega_{S}\times \Omega_{T}) \;|\; \int_{\Omega_{S}}\pi(x,y)d\lambda(x) = \nu(y) \;\; \text{and} \;\; \int_{\Omega_{T}}\pi(x,y)d\lambda(y) = \mu(x)\Bigr\}\end{equation} 
Here, $\mathcal{P}(\Omega_{S}\times \Omega_{T})$ denotes the set of pdfs in $\Omega_{S}\times \Omega_{T}$.  Interestingly, the objective in \ref{kanto} does not require a parametric model of $\pi$. This is a major distinction between OT and other parametric design methods, mainly copulas \cite{Sklar1973}.  \\\\
In its discrete form, the Kantorovitch problem is a Linear Program (LP) and one would be tempted to apply LP optimization to solve it. However, this program can be prohibitively large, especially in big data regimes. Indeed, if $X$ and $Y$ are discrete random variables, with $\#(\Omega_{S})=n$ and $\#(\Omega_{T})=m$, then the complexity of the LP scales at least in $\mathcal{O}(d^{3}\log(d))$, where $d = max(n,m)$ \cite{https://doi.org/10.48550/arxiv.1610.06447}. A key computationally efficient formulation is based on the idea of entropy regularisation \cite{https://doi.org/10.48550/arxiv.1306.0895}, where the Boltzmann-Shannon entropy of $\pi$ is used to smooth the original problem. \\\\ Towards defining the entropy-regularised OT, let us define the Kullback-Leibler divergence (KLD) from a variational probability density, $\pi$, to a fixed probability density, $\zeta$, as follows: 
\begin{equation}
KL(\pi|| \zeta) \equiv
\left\{
\begin{aligned}
\int_{\Omega_{S}\times\Omega_{T}} \Bigl(\pi(x,y)\log\Bigl(\frac{\pi(x,y)}{\zeta(x,y)}\Bigl)\Bigr)d\lambda(x,y) & \;\; \text{if} \;\; \pi \ll \zeta  \\
+ \infty & \;\; \text{otherwise} \\
\end{aligned} \right.
\end{equation}
The regularised OT problem reads:
\begin{equation}\label{regul_ot}
\pi^{o}_{OT,\epsilon, \phi}(x,y | \mathcal{K}) \equiv \argmin_{\pi \in \boldsymbol{\Pi}_{\mathcal{K}}}\left\{\int_{\Omega_{S} \times \Omega_{T}} c(x, y)\pi(x, y)d\lambda(x,y) + \epsilon KL(\pi||\phi)\right\}
\end{equation}
where $\epsilon > 0$ is the regularisation constant, $\phi$ is a pdf with support in $\Omega_{S}\times\Omega_{T}$ and $\boldsymbol{\Pi}_{\mathcal{K}} \equiv \boldsymbol{\Pi}(\mu, \nu)$. \\\\ If we specialise $\phi$ to the uniform distribution $\mathcal{U}$ with support in $\Omega_{S}\times\Omega_{T}$, then equation \ref{regul_ot} leads to the well-known Boltzmann-Shannon entropy-regularised OT problem \cite{Cuturi_2018}, \cite{https://doi.org/10.48550/arxiv.1610.06447}: 
\begin{equation}\label{ot_entropy}
\pi^{o}_{OT,\epsilon, \mathcal{U}}(x,y | \mathcal{K}) \equiv \argmin_{\pi \in \boldsymbol{\Pi}_{\mathcal{K}}}\left\{\int_{\Omega_{S} \times \Omega_{T}} c(x, y)\pi(x, y)d\lambda(x,y) + \epsilon KL(\pi|| \mathcal{U})\right\}
\end{equation}
% The entropy-regularised problem in equation \ref{ot_entropy} yields a solution with maximum entropy, or equivalently minimum mutual information. 
It is worth noting that the KLD in \ref{regul_ot} can be generalised to other entropies and more general Bregman divergences, as proposed in \cite{https://doi.org/10.48550/arxiv.1610.06447}.
% \textit{\color{red} Temporary note following our Wednesday's discussion: \\ the primal regularised OT problem reads: \\ $\min_{\pi \in \Pi(p, q)} \sum_{i,j}\pi_{i,j}c_{i,j}$ \;\; s.t. \;\; $KL(\pi||U) \leq KL(p^{T}q||U) + \alpha$ \\
% Which means:
% $\min_{\pi \in \Pi(p, q)} \sum_{i,j}\pi_{i,j}c_{i,j}$ \;\; s.t. \;\; $KL(\pi||p^{T}q) \leq \alpha$ \\
% Indeed, we know that $H(\pi) = H(p) + H(q|p)$, where $H$ stands for the entropy.  \\
% Or: $H(\pi) \leq H(p) + H(q)$ \\
% Therefore, the entropy of $\pi$ is upper bounded by the sum of the entropies of p and q, with equality iff p and q are independent. Hence: 
% $\argmax_{\pi \in \Pi(p,q)}H(\pi) = p^{T}q$
% Or equivalently: $\max_{\pi \in \Pi(p,q)}H(\pi) \equiv \min_{\pi}(KL(\pi||p^{T}q))$ \\
% The mutual information is defined as: $KL(\pi||p^{T}q)$, maximizing the entropy of $\pi$ is equivalent to minimizing the MI between p and q. 
\\\\
By introducing an entropy term, the Kantorovitch problem becomes strongly convex and efficient iterative scaling algorithms can be used to compute an approximate transport plan (namely, Sinkhorn-Knopp in the discrete case, Fortet in the continuous case) \cite{https://doi.org/10.48550/arxiv.1306.0895}. Though the main reason for introducing an entropy regularizer in OT stems from a computational argument, we argue it can also be viewed as a direct application of Jaynes' Maximum Entropy principle \cite{1056144}. \\\\
In addition to the statistical similarity embedded in the transport cost function $c$, we may need to design a more \textit{structured} joint pdf $\pi(x,y)$. \textit{Structure} can refer to any inherent property in the source and target marginals, such as the semantics of protected attributes (e.g. age, gender, ethnicity), the structure in an image or a graph, etc. It can also refer to a probabilistic knowledge to be processed in the optimization problem \ref{regul_ot}. The existing methodologies addressing structured OT rely either on the notion of sub-modular functions (in particular sub-modular cost functions with diminishing returns) \cite{https://doi.org/10.48550/arxiv.1712.06199} or the addition of a regularisation term (group Lasso, Laplacian) \cite{https://doi.org/10.48550/arxiv.1507.00504} in order to encourage particular mappings over others. \\\\
In the following section, we introduce Fully Probabilistic Design (FPD), which we argue is a generalization of regularised OT. We believe that relaxing the OT formalism into the more generic FPD framework may enable a new set of probabilistic knowledge constraints to be processed in OT.

\section{Fully Probabilistic Design}\label{7}
Fully Probabilistic Design (FPD) is a method for designing pdfs under uncertainty and follows from the principle of minimum discrimination information \cite{10.2307/2528330}. It is, in fact, a generalisation of the classical Bayesian conditioning, allowing the processing of probabilistic knowledge constraints $\mathcal{K}$ in the design of the pdf $\pi(x,y|\mathcal{K})$, without requiring a specified joint model $\pi(x,y, \mathcal{K})$ \cite{QUINN2016532}.\\\\  The axiomatic formulation of FPD, as an optimal distributional design problem, was first established in \cite{KARNY2020104719}, where the authors proved that it is an extension of Bayesian decision making. Later, the FPD framework was extended to Hierarchical Bayesian models in \cite{QUINN2016532}, yielding a stochastic model of the unknown pdf. \\\\
More formally, FPD seeks a pdf $\pi(x,y)$ which satisfies predefined design constraints, formalized as membership of a set, $\boldsymbol{\Pi}_{\mathcal{K}}$, of knowledge-constrained pdfs: \[\pi \in \boldsymbol{\Pi}_{\mathcal{K}}\] 
However, these constraints do not usually specify uniquely the unknown $\pi$. To select the optimal solution, FPD relies on the notion of \textit{ideal design}, which specifies the designer's zero-loss choice of $\pi$. Hence, the ideal design, denoted by $\mathcal{\pi}^{I}$, does not necessarily satisfy the constraints in $\boldsymbol{\Pi}_{\mathcal{K}}$, i.e. $\pi^{I} \notin \boldsymbol{\Pi}_{\mathcal{K}}$. To compute the optimal solution, a utility (loss) function is then used to compare and rank the candidate probability densities, based on their degree of closeness to the ideal design. In \cite{10.1214/aos/1176344689}, the KL divergence is proved to be the most appropriate utility function for ranking preferences in an inference problem. Hence, the solution of the FPD problem is given by: 
\begin{equation}\label{FPD}
   \mathcal{\pi}_{FPD, \pi^{I}}^{o}(x,y|\mathcal{K}) \equiv \argmin_{\pi \in \boldsymbol{\Pi}_{\mathcal{K}}}\left\{KL(\pi(x,y)|| \pi^{I}(x,y)) \right\} \; , \;\; \pi \ll \pi^{I}
\end{equation}

The ideal design in \ref{FPD} is the second argument of the KL divergence and as such, is the zero-KLD datum against which all possible pdfs, consistent with the constraint set $\boldsymbol{\Pi}_{\mathcal{K}}$, are ranked \cite{QUINN2016532}. The FPD solution is therefore the pdf minimizing the KL divergence \textit{w.r.t} the ideal design, while being consistent with $\boldsymbol{\Pi}_{\mathcal{K}}$. \\\\
Given that both the Kantorovitch and the FPD problems seek the optimal design of a joint pdf, it is natural to investigate possible connections between the two frameworks, which we shall study in the next section. 
\section{The FPD-OT framework}\label{FPDOT}
The goal of this section is to formally establish the connection between FPD and OT. We start with the general form of FPD-OT, then derive the special case of the entropy-regularised problem. \newpage 
\subsection{Main theorem}\label{theorem_1}
\begin{theorem}\label{main_theorem}

Let ($\Omega_{S}$, $\mathfrak{F}_{S}$, $\mu$) and  ($\Omega_{T}$, $\mathfrak{F}_{T}$, $\nu$) be two measure spaces, and $\boldsymbol{\Pi}_{\mathcal{K}}$ be the set of joint pdfs $\pi(x,y)$ with support in $\Omega_{S} \times \Omega_{T}$, with prescribed marginals $\mu$ and $\nu$. Let $\phi$ be a fixed pdf, which dominates $\pi$, i.e., $\pi \ll \phi$. $c(x,y) \geq 0$ is a cost function and $\epsilon > 0$ is a regularisation term. \\ Then:
\begin{equation}\label{fpd_problem}
\mathcal{\pi}_{FPD, \pi^{I}}^{o}(x,y|\mathcal{K})  = \pi^{o}_{OT,\epsilon, \phi}(x,y | \mathcal{K}) 
\end{equation}

if $\pi^{I}$, the ideal design, is the extended Gibbs kernel, defined as: \begin{equation} \pi^{I}(x,y) \equiv \exp\Bigl(\frac{-c(x,y)}{\epsilon}\Bigr)\frac{\phi(x,y)}{\mathcal{N}_{\phi, \epsilon}}\end{equation}
Here, $\mathcal{N}_{\phi, \epsilon} \equiv \int_{\Omega_{S} \times \Omega_{T}}\exp\Bigl(\frac{-c(x,y)}{\epsilon}\Bigr)\phi(x,y)d\lambda(x,y)$ is the normalizing constant. 
\end{theorem}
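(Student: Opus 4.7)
The plan is to show that, up to an additive constant and a positive multiplicative factor, the FPD objective $KL(\pi \,\|\, \pi^I)$ coincides with the regularised OT objective $\int c\,\pi\, d\lambda + \epsilon\, KL(\pi\,\|\,\phi)$ on the feasible set $\boldsymbol{\Pi}_{\mathcal{K}}$. Since both programs minimise over the same set $\boldsymbol{\Pi}_{\mathcal{K}}$, equality of the argmins will follow immediately.

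First I would check that the proposed ideal design $\pi^I(x,y) = \exp(-c(x,y)/\epsilon)\,\phi(x,y)/\mathcal{N}_{\phi,\epsilon}$ is a well-defined pdf dominating every admissible $\pi$: positivity and integration to $1$ are immediate from the definition of $\mathcal{N}_{\phi,\epsilon}$, and since $\pi \ll \phi$ and $\exp(-c/\epsilon) > 0$ for $\epsilon > 0$ and $c \geq 0$, we have $\pi \ll \pi^I$, so the KL divergence in \ref{FPD} is finite (or at worst $+\infty$ simultaneously with $KL(\pi\,\|\,\phi)$).

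The key step is a direct algebraic expansion. Substituting the expression for $\pi^I$ into $KL(\pi\,\|\,\pi^I)$ and splitting the logarithm gives
\begin{equation*}
KL(\pi\,\|\,\pi^I) \;=\; \int \pi(x,y)\log\frac{\pi(x,y)}{\phi(x,y)}\, d\lambda(x,y) \;+\; \frac{1}{\epsilon}\int c(x,y)\,\pi(x,y)\, d\lambda(x,y) \;+\; \log \mathcal{N}_{\phi,\epsilon},
\end{equation*}
that is,
\begin{equation*}
\epsilon\, KL(\pi\,\|\,\pi^I) \;=\; \int c(x,y)\,\pi(x,y)\, d\lambda(x,y) \;+\; \epsilon\, KL(\pi\,\|\,\phi) \;+\; \epsilon \log \mathcal{N}_{\phi,\epsilon}.
\end{equation*}
The last term is independent of $\pi$, and $\epsilon > 0$, so minimising the left-hand side over $\boldsymbol{\Pi}_{\mathcal{K}}$ is equivalent to minimising the regularised OT objective in \ref{regul_ot} over the same set; by uniqueness (which follows from strong convexity of the KL divergence on the convex feasible set $\boldsymbol{\Pi}_{\mathcal{K}}$, assuming it is non-empty), the two argmins coincide, which is \ref{fpd_problem}.

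The only mild obstacles are bookkeeping ones: justifying that all integrals split as claimed (which follows from $\pi \ll \phi$ on the support of $\pi$, so that $\log(\pi/\phi)$ and $c/\epsilon$ are both $\pi$-integrable whenever the FPD objective is finite), and verifying that $\mathcal{N}_{\phi,\epsilon}$ is finite so that $\log \mathcal{N}_{\phi,\epsilon}$ is a genuine constant; the latter holds because $c \geq 0$ implies $\exp(-c/\epsilon) \leq 1$ and $\phi$ is a pdf, so $\mathcal{N}_{\phi,\epsilon} \in (0,1]$. Once these technicalities are dispatched, the theorem reduces to the one-line identity above.
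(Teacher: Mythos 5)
Your proposal is correct and follows essentially the same route as the paper: both rest on the single algebraic identity $\epsilon\, KL(\pi\,\|\,\pi^{I}) = \int c\,\pi\, d\lambda + \epsilon\, KL(\pi\,\|\,\phi) + \epsilon\log\mathcal{N}_{\phi,\epsilon}$, so the two objectives differ only by a positive scale and an additive constant and share their argmin over $\boldsymbol{\Pi}_{\mathcal{K}}$. You merely expand in the opposite direction (from the FPD objective rather than from the OT objective) and make explicit the bookkeeping the paper leaves implicit, namely that $\mathcal{N}_{\phi,\epsilon}\in(0,1]$, that $\pi\ll\phi$ implies $\pi\ll\pi^{I}$, and that the constant $\epsilon\log\mathcal{N}_{\phi,\epsilon}$ does not affect the minimiser.
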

\begin{proof}
% The Entropy term in the regularised OT objective (7) can be interpreted as the KL divergence with respect to the Uniform distribution: 
% \textbf{H}($\pi$) $\propto$ KL($\pi$ || \textbf{U}). \\
% We can generalise this observation to an arbitrary target distribution $\phi$, which encodes a larger set of design choices. \\
From \ref{regul_ot}, we have:
\begin{equation} 
\begin{split}
\pi^{o}_{OT,\epsilon, \phi}(x,y | \mathcal{K})  & \equiv \argmin_{\pi \in \boldsymbol{\Pi}_\mathcal{K}}\int_{\Omega_{S}\times \Omega_{T}}\pi(x,y)\log\left\{\frac{\pi(x,y)\exp(\frac{c(x,y)}{\epsilon})}{\phi(x,y)}\right\}d\lambda(x,y) \\
&= \argmin_{\pi \in \boldsymbol{\Pi}_{\mathcal{K}}}KL(\pi||\pi^{I}) \\
&=  \pi^{o}_{FPD, \pi^{I}}(x,y | \mathcal{K})
\end{split}
\end{equation} 
where the ideal design is of the following form: 
\begin{equation}
\pi^{I}(x,y) \equiv \exp\Bigl(\frac{-c(x,y)}{\epsilon}\Bigr)\frac{\phi(x,y)}{\mathcal{N}_{\phi, \epsilon}}
\end{equation}
\end{proof}
\textbf{Remark 1:} 
When $\epsilon$  $\rightarrow$ $\infty$ , the ideal design is completely defined by $\phi$: \[\pi^{o}_{FPD, \pi^{I}}(x,y|\mathcal{K})  \xrightarrow[\varepsilon \to \infty]{} \pi^{o}_{FPD, \phi}(x,y|\mathcal{K})\]
When $\epsilon$  $\rightarrow$ $0$, we recover, in the limit, the original Kantorovitch problem: 
  \[\pi^{o}_{FPD, \pi^{I}}(x,y|\mathcal{K}) \xrightarrow[\varepsilon \to 0]{} \pi_{OT}(x,y|\mathcal{K})\]

\textbf{Remark 2:} The KL divergence being 1-strongly convex, the problem stated in \ref{fpd_problem} is $\epsilon$-strongly convex, therefore yielding a unique solution. \\\\ 
\textbf{Remark 3:} The ideal design $\pi^{I}$ is composed of two factors: \begin{enumerate*}[label=(\roman*)] \item the Gibbs kernel, which is a function of the transportation cost $c(x,y)$ and \item $\phi$, corresponding to a reference pdf which encodes the designer's preferences about the optimal pdf $\pi^{o}$. \end{enumerate*} \\\\
\textbf{Remark 4:} The Boltzmann-Shannon entropy-regularised problem is an instance of FPD-OT, where the ideal design is the Boltzmann distribution. The energy in this case is the transportation cost $c$ and the regularisation term $\epsilon$ is proportional to the temperature: \\
% One can view this result as a generalisation of Different regularisation terms have been used in the literature, in addition to the Entropy $\textbf{H}$, such as the quadratic norm and Tsallis Entropies \cite{DBLP:journals/corr/abs-1711-04366}, \cite{10.5555/3298483.3298583}. This observation leads to the following generalisation of OT/FPD, which consists in replacing the Boltzmann-Gibbs distribution by a general distribution $\phi$ $\in$ $\mathbb{R}_{++}^{n \times m}$:

\begin{corollary}
The entropy-regularised OT problem is a specialization of FPD, where the ideal design $\pi^{I}$ reduces to the Boltzmann distribution: 
\begin{equation}
\mathcal{\pi}_{FPD, \pi^{I}}^{o}(x,y|\mathcal{K})  = \pi^{o}_{OT,\epsilon,\mathcal{U}}(x,y | \mathcal{K}) 
\end{equation}
where
\[\pi^{I}(x,y) \equiv \frac{1}{\mathcal{N}_{\epsilon}}\exp\Bigl(\frac{-c(x,y)}{\epsilon}\Bigr)\]
and $\mathcal{N}_{\epsilon}$ is the normalizing constant of $\pi^{I}$. 
\end{corollary}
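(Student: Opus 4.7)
The plan is to obtain the corollary as a direct specialization of Theorem \ref{main_theorem} with the particular choice $\phi = \mathcal{U}$, where $\mathcal{U}$ is the uniform pdf on $\Omega_{S} \times \Omega_{T}$. Since $\mathcal{U}$ is a constant density on its support, the heavy lifting has already been done by the main theorem; all that remains is to verify that the constant value of $\mathcal{U}$ can be absorbed into the normalizing constant of the ideal design, and to check that the left-hand side of the corollary's identity matches equation \ref{ot_entropy}.

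First I would instantiate Theorem \ref{main_theorem} with $\phi(x,y) \equiv \mathcal{U}(x,y) = 1/|\Omega_{S} \times \Omega_{T}|$ (interpreted in the appropriate Lebesgue or counting sense according to the dominating measure $\lambda$). Substituting into the expression for the ideal design gives
\begin{equation}
\pi^{I}(x,y) = \exp\Bigl(\tfrac{-c(x,y)}{\epsilon}\Bigr)\frac{\mathcal{U}(x,y)}{\mathcal{N}_{\mathcal{U}, \epsilon}},
\end{equation}
and since $\mathcal{U}(x,y)$ is a constant, it factors out of the integral defining $\mathcal{N}_{\mathcal{U}, \epsilon}$ and cancels with the $\mathcal{U}(x,y)$ in the numerator. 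Defining $\mathcal{N}_{\epsilon} \equiv \int_{\Omega_{S}\times\Omega_{T}} \exp(-c(x,y)/\epsilon)\, d\lambda(x,y)$, the ideal design collapses to the Boltzmann form $\pi^{I}(x,y) = \exp(-c(x,y)/\epsilon)/\mathcal{N}_{\epsilon}$ stated in the corollary.

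Next I would match the right-hand side of the claimed equality to equation \ref{ot_entropy}. By definition, $\pi^{o}_{OT,\epsilon, \mathcal{U}}$ is the minimizer of $\int c\, \pi\, d\lambda + \epsilon KL(\pi\|\mathcal{U})$ over $\boldsymbol{\Pi}_{\mathcal{K}}$, which is exactly the special case $\phi = \mathcal{U}$ of the regularised OT problem \ref{regul_ot} treated in Theorem \ref{main_theorem}. Invoking the theorem then yields $\pi^{o}_{FPD, \pi^{I}}(x,y|\mathcal{K}) = \pi^{o}_{OT,\epsilon,\mathcal{U}}(x,y|\mathcal{K})$ with the Boltzmann $\pi^{I}$ identified above.

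There is no real obstacle in this proof: it is essentially a substitution argument. The only subtlety to be careful about is that $\mathcal{U}$ must be well-defined as a pdf, which requires $\Omega_{S} \times \Omega_{T}$ to have finite measure under $\lambda$ (finite in the counting case, bounded in the Lebesgue case). Under this mild assumption, which is implicit in the use of $\mathcal{U}$ in equation \ref{ot_entropy}, the cancellation of the constant factor is immediate and the corollary follows.
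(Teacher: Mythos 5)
Your proof is correct and follows exactly the route the paper intends: the corollary is obtained by specializing Theorem \ref{main_theorem} to $\phi = \mathcal{U}$, with the constant value of $\mathcal{U}$ cancelling into the normalizing constant so that the ideal design reduces to the Boltzmann form. Your added remark that $\Omega_{S}\times\Omega_{T}$ must have finite $\lambda$-measure for $\mathcal{U}$ to be a pdf is a reasonable precision that the paper leaves implicit.
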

% \begin{proof}
% A straightforward derivation proves that Entropy-regularised OT is a special case of the more general FPD framework:
% \begin{equation} \label{KL general}
% KL(\pi || \tilde\pi) = \int_{\mathcal{X} \times \mathcal{Y}}\pi(x,y)\log\Bigl(\frac{\pi(x,y)}{\tilde{\pi}(x,y)}\Bigr)d(x,y)
% \end{equation} 
% Where $\tilde\pi$ is our ideal design.
% Let us consider the Gibbs distribution as an ideal design:
% \begin{equation}
% \tilde\pi(x,y) \triangleq \exp(\frac{-C(x,y)}{\epsilon})
% \end{equation} 
% (\textit{Another option would be to define the Gibbs kernel with respect to the product measure $d\mu \otimes d\nu$, as in \cite{Cuturi_2018}, but we won't pursue this option here}). Which implies:
% \begin{equation} \label{proof_theorem}
% \begin{split}
% \min_{\pi \in \Pi(\mu, \nu)} KL(\pi || \tilde\pi) &= \min_{\pi \in \Pi(\mu, \nu)} \int_{\mathcal{X} \times \mathcal{Y}}\pi(x,y)\log(\pi(x,y))d(x,y) + \int_{X \times Y}\pi(x,y)\frac{C(x,y)}{\epsilon}d(x,y) \\
% & = \min_{\pi \in \Pi(\mu, \nu)} \int_{X \times Y}\pi(x,y)C(x,y)d(x,y) - \epsilon \textbf{H}(\pi)
% \end{split}
% \end{equation} 
% \end{proof}

\section{Discussion}\label{conclusions}
This brief paper introduced the FPD-OT framework, a generalization of the OT problem using the conventions of FPD. 
\\\\
We believe that the FPD-OT setting can endow the conventional OT framework with the necessary mechanisms for processing more sophisticated, probabilistic knowledge constraints. Indeed, by relaxing the original knowledge-constrained set $\boldsymbol{\Pi}_{\mathcal{K}}$ away from the set $\boldsymbol{\Pi}(\mu, \nu)$, additional moment constraints can be integrated into the optimization problem, that is,  \begin{equation}
\boldsymbol{\Pi}_{\mathcal{K}}\equiv\Bigl\{\pi \in \boldsymbol{\Pi}(\mu, \nu) \; | \; \int_{\Omega_{S}\times\Omega_{T}} \pi(x,y)\chi(x,y)d\lambda(x,y) \leq \eta \Bigr\}
\end{equation}
where $\chi: \Omega_{S}\times\Omega_{T} \mapsto \mathbb{R}^{+}$ is some measurable function and $\eta \geq 0$ is a constant term. 

The second key implication of FPD-OT is the possibility to design robust pdfs, by explicitly modeling the uncertainty in the marginals $\mu$ and $\nu$. This allows $\pi(x,y|\mathcal{K})$ to become a stochastic object, endowed with its own Hierarchical Bayesian model. \\\\ The above consequences of FPD-OT, and their possible applications in machine learning, will be the subject of a future work.

\bibliography{references.bib}
\end{document}